\begin{document}

\title{Mitigating Backdoors in Federated Learning with FLD}

\author{Yihang Lin, Zhiqian Wu, Yong Liao, Pengyuan Zhou}


\maketitle

\begin{abstract}
Federated learning allows clients to collaboratively train a global model without uploading raw data for privacy preservation. This feature, i.e., the inability to review participants' datasets, has recently been found responsible for federated learning's vulnerability in the face of backdoor attacks. Existing defense methods fall short from two perspectives: 1) they consider only very specific and limited attacker models and thus are unable to cope with advanced backdoor attacks, such as distributed backdoor attacks, which break down the global trigger into multiple distributed triggers.  2) they conduct detection based on model granularity thus their performance gets impacted by the model size. 

To address these challenges, we propose Federated Layer Detection (FLD), a novel model filtering approach to effectively defend against backdoor attacks. FLD examines the models on layer granularity to capture the complete model details and effectively detect potential backdoor models regardless of model size. We provide theoretical analysis and proof for the convergence of FLD. Extensive experiments demonstrate that FLD effectively mitigates state-of-the-art (SOTA) backdoor attacks with negligible impact on the accuracy of the primary task, outperforming SOTA defense methods.
\end{abstract} 

\begin{IEEEImpStatement}
Federated learning (FL), which enables distributed clients to collaboratively train a global model without data centralization, preserves user privacy and has become increasingly prominent. Despite its advancement, FL's vulnerability to adversarial attacks, particularly backdoor threats, has become a critical concern. In FL, the aggregation server does not have direct access to the data, which makes it harder to find these sneaky attacks. This makes it very hard to keep the model's integrity without losing accuracy. To address this, we introduce Federation Layer Detection (FLD), an innovative model filtering approach designed to counteract backdoor attacks in FL. Our experiments across various datasets demonstrate that FLD effectively mitigates such attacks with minimal impact on the main task's performance, highlighting its potential as an effective defense mechanism in secure FL systems.
\end{IEEEImpStatement}

\begin{IEEEkeywords}
Federated learning; backdoor attack; backdoor defense
\end{IEEEkeywords}

\section{Introduction}

The rapidly developing artificial intelligence technologies have been applied to numerous fields, such as computer vision, natural language processing, data mining, and so on. 
Many AI services are supported by cloud services to collect big data from large numbers of distributed users. However, privacy protection regulations released in recent years, such as GDPR~\cite{GDPR}, CCPA~\cite{CCPA} and PDPA~\cite{PDPA}, have presented serious challenges for user data collection. Therefore, privacy-preserving data analysis technologies are critical. 


Federated learning~\cite{googlefederation}, proposed by Google in 2016, allows distributed clients to jointly train a global model without uploading user private data and thus has attracted wide attention. In federated learning, a central parameter server sends an initial global model to the clients. The clients then train local models using local datasets and send the updated model parameters to the server after training. The server updates the global model via parameter aggregation and distributes the updated global model to the participants in the next round. The process iterates until the model is convergent or the predefined period ends.


As a trade-off, federated learning limits the central server's access to users' data. However, this feature, as uncovered by recent studies, makes federated learning vulnerable to adversarial attacks, especially backdoor attacks~\cite{BadNets,Trojaning}. Backdoor attacks embed backdoor into the model without impacting the primary task performance, thus being much more stealthy than other attacks such as untargeted poisoning attacks~\cite{untarget}. 

The defenses against centralized backdoor attacks fall into three broad categories: pre-training, during-training, and post-training~\cite{Backdoorsurvey}.~
%
Pre-training requires pre-processing of training data and post-training requires model fine-tuning using users' datasets, both of which require access to local data and thus cannot be applied to federated learning. As a result, backdoor defenses for federated learning mostly use robustness aggregation or differential privacy perturbation during the training stage. However, the state-of-the-art~(SOTA) defense methods are either only applicable to independent and identically distributed~(i.i.d) datasets, or can only defend against conventional backdoor attacks but fail at defending SOTA attacks like DBA~\cite{xie2020dba} and A Little Is Enough~\cite{littleisenough}. 

To address these challenges, we propose Federated Layer Detection~(FLD), which measures the fine-grained layer-level differences across models. FLD consists of two major modules, namely Layer Scoring and Anomaly Detection, which can reliably identify and remove malicious clients to guarantee system performance.  Layer Scoring assigns an outlier score to each layer of the models uploaded by the clients based on the concept of \textit{density} instead of commonly used \textit{distance} which sometimes causes wrong detection results~\cite{cof}.  Anomaly Detection uses the median absolute deviation (MAD)~\cite{MAD} of layer scores to determine if a model is anomalous. FLD overcomes three major limitations of existing defense methods, namely, simplified assumptions of data distribution, degraded accuracy of the primary task, and only functioning against specific backdoor attacks. Our contributions are mainly threefold, as follows:
\setlist[itemize]{leftmargin=*}
\begin{itemize}
    \item We propose FLD, an innovative backdoor defense scheme for federated learning. FLD is the first fine-grained defense scheme that assesses models based on layer level, to our best knowledge. 
    \item Layer Scoring module captures the fine-grained model details to improve the generalizability of FLD to deeper models. Anomaly Detection employs MAD to avoid impactful mean shifts caused by extreme outlier scores of anomalous models. 
    As such, FLD can effectively detect potential backdoored models regardless of model size. 
    \item We theoretically prove the convergence guarantee of FLD in both i.i.d and non-i.i.d data distributions. We also prove the correctness of FLD in homomorphic encryption scenarios.
    \item Extensive experiments on several well-known datasets can show that FLD effectively defends against a wide range of SOTA federated backdoor attacks in different scenarios without compromising the accuracy of the primary task, demonstrating FLD's robustness and generalizability.
\end{itemize}

\section{RELATED WORK}
\subsection{Federated Learning}
Federated learning is a popular strategy for training models on distributed data while preserving client privacy. Assuming there are $N$ clients, $ C = \left \{ C_{1}, C_{2}, C_{3},\cdots, C_{N}  \right \} $, each of which has a local training dataset $D_{i}, i\in \{1,\cdots, N\}$, that can communicate with the central parameter server to collaboratively train a global model. The standard federated learning process generally follows these phases:
\setlist[enumerate]{leftmargin=*}
\begin{enumerate}
  \item  In round $t$, the central parameter server randomly selects $n$ clients that satisfy predefined conditions to participate in the training and sends these clients the latest global model $ G^{t-1}$.
  \item Each selected client $C_i$ executes $\tau _{i}$ iterations to update its local model: $w_{i}^{t}= w_{i}^{t-1}-\eta_{i}^t g_{i}\left (  w_{i}^{t-1};\xi_{i,k}^{t-1}  \right )$, where $\eta _{i}^t$ is the learning rate, $\xi_{i,k}^{t-1}$ is a batch of uniformly chosen data samples. After the local training, $C_i$ sends updated $w_{i}^{t}$ to the central parameter server. 
  \item The parameter server aggregates the received models to update the global model. We choose the classical FedAvg algorithm~\cite{fedavg} for aggregation: $ G^{t} = {\textstyle \sum_{i=1}^{n}} \frac{ m_{i} }{m} w_{i}^{t}$, where $m_{i} = \left \| D_{i}  \right \|,m= {\textstyle \sum_{i=1}^{n}m_{i} } $. Previous works~\cite{howtobackdoor,xie2020dba,krum} typically used the same weights $\left ( \frac{ m_{i} }{m} =\frac{1}{n}  \right ) $  to average client contributions. For simplicity, we follow this setup, i.e., $G^{t} = {\textstyle \sum_{i=1}^{n}} \frac{1}{n} w_{i}^{t}$.

\end{enumerate}

\subsection{Backdoor Attacks}
Backdoor attacks impact the model training with ``carefully crafted'' poisoned data (mixing triggers into a small portion of data) to get a backdoor-poisoned model. The corrupted model outputs original labels for clean samples but target labels for poisoned samples. Thus, it does not impact primary task performance and is difficult to detect. Besides conventional centralized backdoor attacks, distributed backdoor attacks have emerged in recent years. A formal description of the attacker's purpose is
\begin{equation}
\forall \left ( x,y \right ) \in D, f\left (  G,x\right ) = y \wedge f\left (  G,x^{*} \right ) = y_{backdoor},
\end{equation}
where $D$ is the clean dateset, $ y_{backdoor} $ is the attacker's target label, $x^{*}$ is the clean sample $x$ combined with backdoor triggers.\\
\textbf{Backdoor Attacks on Federated Learning} can be divided into two categories: data poisoning and model poisoning~\cite{federatedopen}:
\setlist[itemize]{leftmargin=*}
\begin{itemize}
  \item \textbf{Data poisoning}: The attacker can only modify the dataset of the compromised clients but cannot control their training process or modify the data uploaded by the clients to the parameter server. The common methods are label flipping (e.g., making that picture of a cat labeled as a dog) and adding triggers to the image samples (e.g., adding a hat to the face images). To avoid being detected, the attackers often control the Poisoned Data Rate (PDR) to restrict the poisoned model's deviation from the benign model. Let $D_{i}$ denote the poisoned training dataset of the compromised client $i$ and $D_{i}^{poi}$ denote the poisoned data, then the PDR of $ D_{i} $ is
\begin{equation}
 PDR=\frac{\left | D_{i}^{poi} \right | }{\left | D_{i} \right | }.  
 \end{equation}
  \item \textbf{Model poisoning} is more powerful than data poisoning because the attacker can manipulate the compromised client's training and directly modify their uploaded data to maximize the impact, driving the global model closer to the backdoor model without being noticed by the anomaly detection mechanism running in the parameter server. A classic model poisoning attack is the model-replacement attack~\cite{howtobackdoor} which scales the uploaded model parameters. 
Model poisoning attacks can be divided into two types, namely scaling and evasion.
\textbf{Scaling} drives the global model close to the backdoor model by scaling up the weights of the uploaded model.
\textbf{Evasion} constrains model variation during training to reduce the malicious model's deviation from the benign model in order to avoid anomaly detection. A common method is to modify the target loss function by adding an anomaly detection term $L_{ano}$ as follows:
\begin{equation}
L_{model} = \alpha L_{class}+\left ( 1-\alpha  \right ) L_{ano},
\end{equation}
where $L_{class}$ captures the accuracy of the primary and backdoor tasks, the hyperparameter $\alpha$ controls the importance of evading anomaly detection. $L_{ano}$ functions as the anomaly detection penalty, e.g., the Euclidean distance between the local models and global model.
Model poisoning can directly control the training process and modify the model weights. Therefore, model poisoning can bypass the anomaly detection mechanism and robust aggregation mechanism deployed on the parameter server. It scales the model weights to satisfy the bound defined by the anomaly detection mechanism.
\end{itemize}
\subsection{Backdoor Defenses}
Works on defending against backdoor attacks in federated learning can be broadly categorized into two directions: robust aggregation and anomaly model detection. Robust aggregation optimizes the aggregation function to mitigate the impact of contaminated updates sent by attackers. One common approach is to apply a threshold to limit the impact of updates from all clients on the global model, such as by constraining the l2 norm of the updates (referred to as clipping)~\cite{howtobackdoor,foolgold,flame,l21}. Other approaches explore new global model estimators, like Robust Federated Averaging (RFA)~\cite{RFA} and Trimmed Mean~\cite{Trimmed_Mean}, to enhance the robustness of the aggregation process. However, a major drawback of these approaches is that contaminated updates may still persist in the global model, resulting in reduced model accuracy and incomplete mitigation of backdoor effects. Additionally, applying update constraints to all clients, including benign ones, reduces the magnitude of updates and thus slows down the convergence. The second direction, anomaly model detection, aims to identify and remove malicious updates from the aggregation process. This task is challenging due to the non-independent and heterogeneous distribution of client data and the uncertainty of the number of malicious clients. Previous methods have typically utilized clustering directly for detection.

\section{Problem Setup}\label{sec:problem}
First, we describe the assumptions behind the convergence analysis of FLD, then we introduce the concepts that are vital to the algorithm design and analysis.

Let $F_i$ denotes the local model of the $i$-th client, $i=1,2,\cdots,N$. Let $F$ denotes the global model in the central parameter server. Suppose our models satisfy Lipschitz continuous gradient, we make Assumptions \ref{assumption1} and \ref{assumption2}.
\newtheorem{assumption}{Assumption}
\begin{assumption}\label{assumption1}
($L$-smooth). $F_1, \cdots,F_N$ are all $L$-smooth: 
\begin{equation*}
    \forall x, y,\quad F_i(x)\leq F_i(y)+(x-y)^{\mathsf{T}}\nabla F_i(y)+\frac{L}{2}||x-y||_2^2.
\end{equation*}
\end{assumption}
 
\begin{assumption}\label{assumption2}
($\mu$-strongly convex). $F_1,\cdots,F_N$ are all $\mu$-strongly convex:
\begin{equation*}
    \forall x, y,\quad F_i(x)\geq F_i(y)+(x-y)^{\mathsf{T}}\nabla F_i(y)+\frac{\mu}{2}||x-y||_2^2.
\end{equation*}
\end{assumption}
We also follow the assumption made by~\cite{stich2018sparsified,yu2019parallel,li2019convergence} as follows.
\begin{assumption}\label{assumption3}
The expected squared norm of stochastic gradients is uniformly bounded, i.e., $\exists U>0$, $\mathbb{E}||\nabla F_i(\cdot)||^2 \leq U^2$ for all $i=1,\cdots,N$.
\end{assumption}
We make Assumption \ref{assumption4} to bound the expectation of $||w_i^t||^2$, where $w_i^t$ denotes the parameters of $F_i$ in $t$-round.
\begin{assumption}\label{assumption4}
(Bounding the expectation of $|| w_i^t ||^2$). The expected squared norm of $i$-th client's local model parameters is bounded: $\exists M>0$, $\mathbb{E}||w_i^t||^2 \leq M^2$ for all $i=1,\cdots,N$ and $t=1,\cdots,T$.
\end{assumption}

\noindent\textbf{Adversary Capability.} Following previous works~\cite{howtobackdoor,krum,munoz2019byzantine,nguyen2020poisoning}, we assume the attacker controls a portion (less than 50\%) of the clients, called ``compromised clients''. We assume that the attacker can possess the strongest attack capability, i.e., both data poisoning and model poisoning. The attacker can send arbitrary gradient contributions to the aggregator in any iteration according to its observation of the global model state. Compromised clients can collude in an intrinsic and coordinated fashion by sharing states and updates with each other. The attacker has no control over the benign clients or the server's aggregation process.

\noindent\textbf{Adversarial Target.} To ensure the effectiveness of the attack, the attack should: 1) be stealthy, i.e., injecting the backdoor should not lead to accuracy fluctuation of the model's primary task 2) have a high attack success rate (ASR), i.e., the success rate that the model identifies the samples containing backdoor triggers as target label should be as high as possible.

The adversarial objective of the compromised client $i$ in round $t$ can be denoted as follows:
\begin{equation}
\begin{split}
w_{i}^ {t} =&arg\max_{w_{i}} ( \sum _{j \in  D_ {i}^ {poi}}  P[  G^ {t+1} (R(  x_ {j}^ {i}  ,  \phi  ))=   y_{backdoor}  ]+\\  &\sum_{j \in  D_ {i}^ {cle}}   P[  G^ {t+1}  (  x_ {j}^ {i}  )=  y_ {j}^ {i}  ]).
\end{split}
\end{equation}
where $D_ {i}^ {poi}$ and $D_ {i}^ {cle}$ denote the poisoned dataset and clean dataset respectively, $D_ {i}^ {poi}\cap D_ {i}^ {cle}=\emptyset $ and $D_ {i}^ {poi}\cup  D_ {i}^ {cle}=D_{i} $. Function $R$ transforms clean data $ x_ {j}^ {i}$ into poisoned data $R(  x_ {j}^ {i}, \phi)$ by embedding the trigger $\phi$. An attacker trains its local model to find the optimal parameters $w_{i}^ {t}$ so that $G^ {t+1}$ identifies poisoned data $R(  x_ {j}^ {i},  \phi  )$ as backdoor target label $y_{backdoor}$ and identifies clean data $ x_ {j}^ {i}$ as ground truth label $y_ {j}^ {i}$.

\noindent\textbf{Defense Goal}.Our defense goal is to mitigate backdoor attacks in federated learning. Specifically, an efficient defense algorithm needs to: 1) ensure the performance of the global model in terms of the main task's accuracy, 2) minimize the possibility of outputting backdoor target labels, and 3) defend against a wide range of SOTA federated backdoor attacks without prior knowledge of the proportion of compromised clients.

\section{Methodology}


\begin{figure*}[!t]
\centering
\includegraphics[width=.8\linewidth]{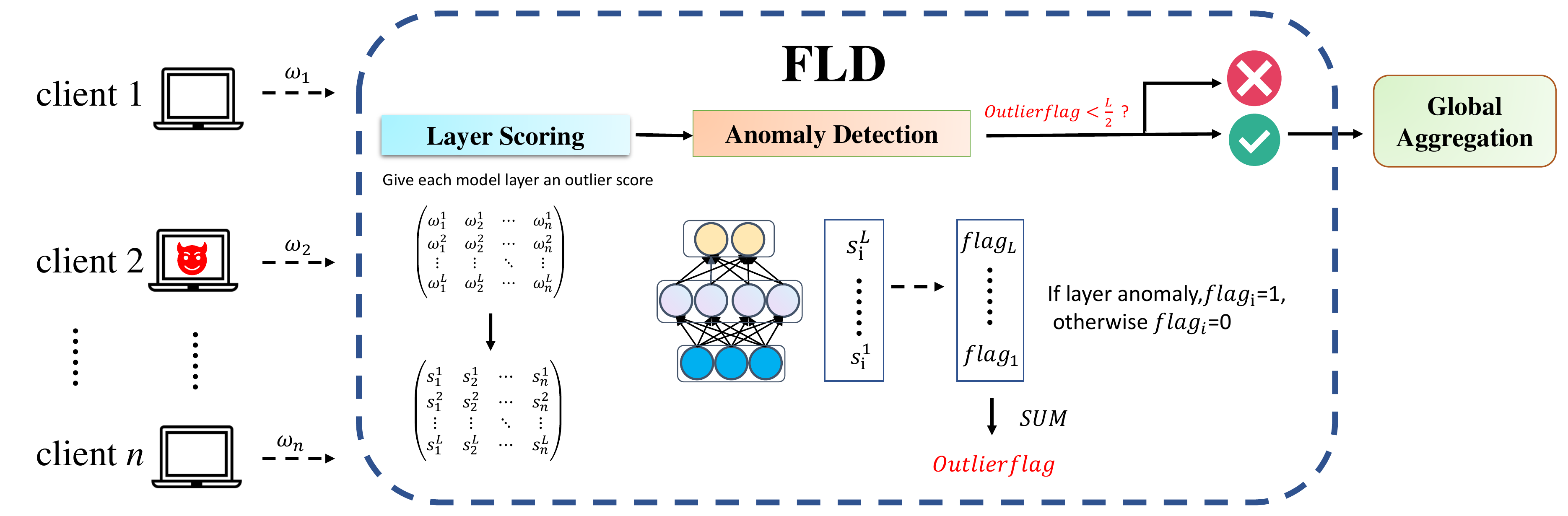}
\caption{Procedure of FLD}
\label{Overview}
\end{figure*}

\subsection{Motivation}
According to the attack analysis in constrain-and-scale~\cite{howtobackdoor}, the federated backdoor attacks are divided into two attack scenarios: single-shot attacks (\textit{Attack A-S}) and multi-shot attacks (\textit{Attack A-M})~\cite{xie2020dba,am1}.
\setlist[itemize]{leftmargin=*}

\noindent\textbf{\textit{Attack A-S}}: the attacker successfully embeds its backdoor trigger in only one round. The attacker performs parameter scaling on the compromised clients' updates to substitute the global model $G^{t}$ with a backdoor model $X$ in Equation~\ref{eq:backdooreq1}: 
\begin{equation}
\label{eq:backdooreq1}
X = {\sum_{i=1}^{n}} \frac{1}{n} w_{i}^{t}.
\end{equation}
To achieve this goal, the attacker can scale the model parameters as follows:
\begin{equation}
\begin{split}\label{eq:backdooreq2}
\tilde{w} _{n}^{t} 
& = nX-\sum_{i=1}^{n-1} w_{i}^{t}\approx nX-\sum_{i=1}^{n-1} G ^{t-1}\\
& = n\left ( X-G ^{t-1} \right )+G ^{t-1}.
\end{split}
\end{equation}
As the global model converges, $w_{i}^{t}\approx  G ^{t-1}$. In other words, the attacker scales up the model weights $X$ by $n$ to prevent the malicious updates from being mitigated by the aggregation.

\noindent\textbf{\textit{Attack A-M}} lets compromised clients accumulate malicious updates over multiple rounds, instead of directly scaling the uploaded parameters, to avoid being detected by the defense algorithm. 
We have thoroughly reviewed the SOTA backdoor defense works in federated learning~\cite{foolgold,krum,Trimmed_Mean,Bulyan,RFA,canyou} and found that existing methods focus on defending against \textit{Attack A-S} while overlooking \textit{Attack A-M}. Unsurprisingly, we found through empirical experiments that SOTA defense algorithms fail at defending against {\textit{Attack A-M}.

To address the challenges mentioned above, we propose Federated Layer Detection~(FLD), an innovative defense method for effectively detecting backdoor attacks in federated learning that overcomes the deficiencies of previous works. As depicted in Fig.~\ref{Overview}, FLD consists of two components, namely Layer Scoring and Anomaly Detection. Layer Scoring assigns scores to the local models uploaded by the clients according to the concept of isolation. Anomaly Detection checks the outlier scores given by Layer Scoring to determine if an uploaded model is compromised. The overall process is as follows: 1) The server receives the local models from the clients participating in the current round. 2) The server assigns each layer of each model an outlier score using Layer Scoring. 3) The server labels each layer as abnormal or not according to its outlier score, and excludes from the aggregation the anomalous models that contain more anomalous layers than the predefined threshold, as summarized in Algorithm~\ref{algorithm1}.
\begin{algorithm}
    \caption{Overview}\label{algorithm1}
    \begin{algorithmic} [1]
        \algrenewcommand\algorithmicrequire{\textbf{Input:}}
        \Require Set of clients $ C = \left \{ C_{1},C_{2},C_{3},\cdots,C_{N}  \right \} $, local datasets $ D = \left \{ D_{1},D_{2},D_{3},\cdots,D_{N} \right \}$, the number of training iterations $T$, the percentage of participating clients per round $K$.
        \algrenewcommand\algorithmicrequire{\textbf{Output:}} 
        \Require Global model $G^T $
        \State Initialize the global model $G^0 $
        \For { $ t\ in \left [ 1,T \right ] $ }
        \State $ n\gets \max \left ( K\cdot N,1  \right ) $ 
        \State $ C^{t} \gets $ (random set of $n$ clients )
        \For {each client $ i\in C^{t} $ in parallel} 
        \State The server sends $G^{t-1}$ to client $i$
        \State $w_{i}^{t} \gets $ ClientUpdate$\left ( D_i, G^{t-1}  \right) $ 
        \State Client $i$ sends $w_{i}^{t}$ back to the server
        \EndFor
        \State $\left(S_{1} ,\cdots,S_{n}\right)\gets Layer Scoring\left(w_{1}^{t},\cdots,w_{n}^{t}\right)$
        \State $C_{b}^{t}\gets Anomaly Detection \left(S_{1} ,\cdots,S_{n}\right)$
        \State  $m\gets len\left ( C_b^t \right ) $
        \State  $  G^{t}=  \sum_{i\in  C_b}^{} \frac{1 }{m} w_{i}^{t}$
        \EndFor
    \end{algorithmic}
\end{algorithm}


%
\subsection{Layer Scoring}
In round $t$, the parameter server sends the global model $G^{t-1}$ to the selected clients $i\in C^t$, each of which trains $G^{t-1} $ using its local data $D_{i}$ and sends the model parameters $w_{i}^{t}$ back to the server after local training is completed.

Existing backdoor defense methods assess uploaded models based on either similarity or distance metrics. They usually flatten the parameters of each model layer and then stitch them into a vector to perform the assessment. However, different layers of the neural network have heterogeneous parameter value distribution spaces due to their different functions. For example, in a CNN, the lower layers learn to detect simple features such as stripes, the middle layers learn to detect a part of an object, and the higher layers learn to detect a concept (e.g., a person)~\cite{distribution}. As a result, directly flattening and splicing the parameters of each layer easily leads to the loss of important information and hence the escape of malicious models. Therefore, finer-grained detection is demanded.

To address this issue, we have devised a hierarchical detection method, Layer Scoring, to measure fine-grained model differences, as shown in Algorithm~\ref{algorithm2}. Layer Scoring examines and assigns an outlier score to each layer of the uploaded models in turn. To provide accurate scores, the outlier scoring method is crucial and faces the following challenges:

\noindent\textbf{C1}: \textbf{The proportion of compromised clients is unknown.} Many existing works on outlier detection reply to the impractical assumption of knowing the proportion of compromised clients in advance, which severely limits their applicability in reality. To address this limitation, in this work, we propose an algorithm without requiring such prior knowledge. As such, conventional outlier detection methods such as K-Nearest Neighbors (KNN) and One-Class SVM, which rely on prior knowledge of the number of neighbors, are not feasible. 


\noindent\textbf{C2}: \textbf{Identifying backdoored models in dynamic scenarios.} In each round, the number of injected backdoors is unknown and may vary. Hence, it is important to have a stable backdoored model identification method that can effectively handle dynamic attacks. Otherwise, many false positives may be generated, failing the backdoor defenses and impacting the main task's accuracy.


To address the above challenges, we chose Connectivity-based Outlier Factor~(COF)~\cite{cof} as our outlier detection algorithm. COF is a density-based outlier detection algorithm that measures the degree of connectivity of a data point to its neighboring points. COF calculates the outlier score of each data point by comparing its average reachability distance to that of its neighbors. COF is advantageous over other distance-based outlier detection algorithms as it is less sensitive to the number of dimensions of the data and can effectively detect outliers in high-dimensional data. It is also able to detect outliers in non-uniform density data sets and is less affected by the presence of noise in the data. Additionally, COF does not require any assumptions about the underlying data distribution, making it more robust to different types of data. Therefore, COF is a better choice for identifying backdoored models in a dynamic federated learning setting where the proportion of compromised clients is unknown and may vary over time.



\begin{algorithm}
    \caption{Layer Scoring}\label{algorithm2}
    \begin{algorithmic} [1]
        \algrenewcommand\algorithmicrequire{\textbf{Input:}}
        \Require The local model $w_{i}$ uploaded by each client $ i\in C^{t} $
        \algrenewcommand\algorithmicrequire{\textbf{Output:}} 
        \Require  The set of Layer Scoring $S_{i}$ for each client $ i\in C^{t}$ 
        \State \textbf{initialize }  $n\gets len\left ( C^{t} \right ) $
        \For {$layer\ j\ in \left [ 1,total \right ]$ } \Comment{$total$ is the number of layers of the model }
        \State  $\left ( s_{1}^{j} ,\cdots,s_{n}^{j} \right ) \gets  COF\left ( w_{1}^{j} ,\cdots,w_{n}^{j} \right ) $
        \For {$  i \in \left [ n \right ]$ } 
        \State Add $s_{i}^{j}$ to the set of Layer Scoring $S_{i}$
        \EndFor
        \EndFor
        \State return $S_{1} ,\cdots,S_{n}$
    \end{algorithmic}
\end{algorithm}

\subsection{Anomaly Detection}
Layer Scoring assigns each layer of each local model an outlier score. Then, Anomaly Detection uses the scores to identify the anomalous clients to safeguard the model from backdoor attacks. Anomaly Detection checks the scores \textit{layer by layer} and increments a model's flag count by one upon finding an abnormal layer score. In the end, the clients with the higher flag counts are marked as anomalies. In this paper, we mark clients with more than 50\% of the layer count as anomalies. The algorithm for determining layer anomalies needs to be carefully designed to achieve the three \textit{defense goals} as mentioned in Section~\ref{sec:problem}.

We follow the common assumption that less than 50\% of clients are compromised. We argue that the commonly employed Three Sigma Rule~\cite{ThreeSigmaRule} and Z-score~\cite{Zscore}} can not identify anomalous clients well, because these algorithms assess clients using the mean value, which can be strongly influenced and shifted towards the location of the outliers in the presence of extreme outliers, resulting in failed outlier identifications.
To solve this problem, we use MAD for anomaly detection because: 
\begin{enumerate*}
  \item \textbf{it tolerates extreme values} since MAD uses the median which is not affected by extreme values, and 
  \item \textbf{it can be applied to any data distribution}, unlike Three Sigma Rule and Z-score which are only applicable to normally distributed data.
\end{enumerate*}
The Anomaly Detection processes include: 
\begin{enumerate*}
  \item Calculate the median of all the outlier scores.
  \item Calculate the absolute deviation value of the outlier scores from the median.
  \item Assign the median of all the absolute deviation values to MAD.
  \item A layer whose outlier score deviates from the median by larger than $\mu$ MAD is classified as anomalous and the model's flag ($Outlierflag$) is incremented by one. $\mu$ is the hyperparameter which we set to 3 by default in the experiments.
\end{enumerate*}
In each round, the server gets all the uploaded models and checks all their layers to get $ Outlierflag_{i},\forall i\in \left [ n \right ] $. FLD classifies the models of which at least half of the layers are marked as anomalies as anomalous models and aggregates only the other models that are classified as benign models. 
\begin{algorithm}
    \caption{Anomaly Detection}\label{algorithm3}
    \begin{algorithmic} [1]
        \algrenewcommand\algorithmicrequire{\textbf{Input:}}
        \Require: The set of Layer Scoring from each client $ i\in C_{t}$ are regarded as $S_{i}$
        \algrenewcommand\algorithmicrequire{\textbf{Output:}} 
        \Require  The benign clients set $C_b^t$
        \State \textbf{initialize }  $n\gets len\left ( C^{t} \right ) $
        \State \textbf{initialize }  $Outlierflag_{i}  \gets 0, \forall i\in \left [ n \right ] $
        
        \For {$layer\ j\ in \left [ 1,total \right ]$ } \Comment{$total$ is the number of layers of the model }
        \State  $ Me \gets  MEDIAN\left ( S_{1}^{j} ,\cdots,S_{n}^{j} \right ) $
        \State  $MAD\gets MEDIAN(\left | S_{1}^{j}-Me \right |, \cdots, \left | S_{n}^{j}-Me \right |)$
        \For {$  i \in \left [ n \right ]$ } 
        \State $ flag1\gets \left ( S_{i}^{j} >= Me + \mu *MAD\right  ) ?1:0$
        \State $ flag2\gets \left ( S_{i}^{j} <= Me - \mu *MAD\right  ) ?1:0$
        \State $ Outlierflag_{i}\gets Outlierflag_{i}+flag1+flag2$
        \EndFor
        \EndFor
        \For {$  i \in \left [ n \right ]$ }
        \If{$Outlierflag_{i}<total/2$}
        \State Add $i$ to the benign clients set $C_b^t$
        \EndIf
        \EndFor
        \State return $C_b^t$
    \end{algorithmic}
\end{algorithm}

\subsection{Private FLD}
Many attacks on federated learning have been proposed besides backdoor attacks, such as membership inference attack and attribute inference attack. These attacks all demonstrate the necessity of enhancing the privacy protection of federated learning to prohibit access to local model plaintext updates. In general, there are two approaches to protect the privacy of customer data: differential privacy and encryption techniques such as homomorphic encryption~\cite{hom} or multi-party secure computation~\cite{mpc}. Differential privacy is a statistical and simple-to-implement method, but with impacts on the model performance, while encryption provides strong privacy guarantees and protection, but at the cost of reduced efficiency.
Specifically, homomorphic encryption is a cryptographic primitive that allows computations to be performed on encrypted data without revealing the underlying plaintext. The basic idea is to encrypt the plaintext first to obtain the ciphertext and continue the calculation operation on the ciphertext, decrypt the final ciphertext result to obtain the plaintext, to keep the result consistent with the calculation on the plaintext. For example, Paillier cryptosystem is a representative additive homomorphic encryption that has been commonly used in federated learning. It has the following two homomorphic properties:
\setlist[itemize]{leftmargin=*}
\begin{itemize}
\item \textbf{Homomorphic addition of plaintexts}: $\llbracket{ x_{1}\rrbracket}\cdot \llbracket{ x_{2}\rrbracket}= \llbracket{x_{1}+x_{2}\rrbracket}$, where $ x_{1}$ and $ x_{2}$ represent  plaintexts, $\llbracket{~\rrbracket}$ 
is an encryption operation.
\item \textbf{Homomorphic multiplication of plaintexts}: $ \llbracket{ x\rrbracket}^{r}= \llbracket{ r\cdot x\rrbracket} $, where  $ x$  represents  plaintext, $\llbracket{~\rrbracket}$ 
is an encryption operation, $r$ is  a constant.
\end{itemize}
Next we illustrate the applicability of FLD in federated learning homomorphic encryption scenarios.
First, we follow the federated setup of~\cite{privacyfl}:
\setlist[itemize]{leftmargin=*}
\begin{itemize}
\item \textbf{Server} is responsible for receiving the gradients submitted by all participants and conducting aggregation to obtain a new global model.
\item \textbf{Cloud Platform (CP)} performs homomorphic encryption calculations together with the server. The CP holds a \textit{(private-key, public-key)} pair generated by a trusted authority for encryption and decryption.
\end{itemize}
Our algorithm is summarized in Algorithm~\ref{algorithm4}.\\
\begin{algorithm}[t!]
    \caption{Private FLD}\label{algorithm4}
    \begin{algorithmic} [1]
        \algrenewcommand\algorithmicrequire{\textbf{Input:}}
        \Require: The local model $\llbracket{w_{i}}\rrbracket$ uploaded by each client $ i\in C^{t} $
        \algrenewcommand\algorithmicrequire{\textbf{Output:}} 
        \Require  The set of Layer Scoring $S_{i}$ for each client $ i\in C^{t}$ 
        \algrenewcommand\algorithmicrequire{\textbf{Server:}}
        \Require
        \State  Randomly select m nonzero integer $r_i $ for j in [1,m]
        \Comment{m is the length of $w_{i}$ }
        \For {$  j \ in \left [ 1,m \right ]$ } \Comment{$n$ is  }
        \State $ c_{ij} \gets \llbracket{\omega_{ij} }\rrbracket\cdot \llbracket{r_{j} }\rrbracket  $
        \EndFor
        \State send $\left \{  c_{ij}  \right \} _{j=1}^{j=n} $ to CP
    \end{algorithmic}
    \begin{algorithmic}[1]
    \algrenewcommand\algorithmicrequire{\textbf{CP:}}
    \Require:
    \For {$  j \ in \left [ 1,m \right ]$ } \Comment{$n$ is  }
        \State  $ \omega _{ij}^{'} \gets Dec(sk_{c},c_{ij} ) $
    \EndFor
    \State $\left(S_{1} ,\cdots,S_{n}\right)\gets Layer Scoring\left(w_{1}^{'},\cdots,w_{n}^{'}\right)$
    \State Send $\left(S_{1} ,\cdots,S_{n}\right)$ to PS
    \end{algorithmic}
\end{algorithm}
Correctness: To ensure that FLD can effectively identify malicious gradients, we need to prove that homomorphic encryption does not affect the calculation of COF anomaly detection.
According to the properties of homomorphic encryption, we have 
\begin{equation}
    \begin{split}\label{hm}
	  c_{ij}
	& =  \llbracket{\omega_{ij} }\rrbracket\cdot \llbracket{r_{j} }\rrbracket \\
	& =  \llbracket{\omega_{ij} } + {r_{j} }\rrbracket .
    \end{split}
\end{equation}
so $\omega _{ij}^{'} = \omega_{ij}  + r_{j} $, for $\omega _{x}^{'} $ and $\omega _{y}^{'} $ the Euclidean distance is
    \begin{equation}
    \begin{split}\label{hm}
	  \left \| \omega _{x}^{'}-\omega _{y}^{'} \right \| 
        & = \sqrt{\sum_{j=1}^{n}{\left( \omega _{xj}^{'}-\omega _{yj}^{'} \right)^{2} }}\\
	& =  \sqrt{\sum_{j=1}^{n}{\left( \omega_{xj}  + r_{j}- (\omega_{yj}  + r_{j}) \right)^{2} }} \\
        & = \sqrt{\sum_{j=1}^{n}{\left( \omega _{xj}-\omega _{yj} \right)^{2} }}\\
        & = \left \| \omega _{x}-\omega _{y} \right \|.
    \end{split}
\end{equation}

When the distance metric is Euclidean distance, the COF anomaly detection algorithm can still function in the homomorphic encryption scenario and the results are consistent with the plaintext.

\subsection{Convergence Analysis}
To analyze the convergence of FLD, we propose the theorem of convergence and prove it.


 


\newtheorem{thm}{Theorem}
\begin{thm}\label{thm1}
Let Assumptions \ref{assumption1} to \ref{assumption4} hold and $L$, $\mu$, $U$, $M$ be defined therein. Choose the learning rate $\eta^t=\frac{\theta}{t+\epsilon}$, $ \epsilon>0$, $\theta > \frac{1}{\mu}$, we define $\lambda=\max\{\frac{\theta A}{\theta \mu -1}, (\epsilon+1)Z_1\}$. Then FLD satisfies 
\begin{equation}
    \begin{split}
        \mathbb{E}[F(G^t)]-F^*
	\leq \frac{L}{2} Z_t 
	\leq \frac{L}{2}\frac{\lambda}{(t+\epsilon )^{\frac{1}{2}}}
	\stackrel{t \to \infty}{\longrightarrow}0,
    \end{split}
\end{equation}
where 
\begin{equation}
    \begin{split}
        & A=4U^2+M^2+2\Gamma, \\
        & Z_t=\mathbb{E}||G^t-G^*||^2.
    \end{split}
\end{equation}
\end{thm}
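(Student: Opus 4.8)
The plan is to reduce the statement to a standard strongly-convex stochastic-optimization argument, carried out in three stages: (i) pass from the suboptimality gap to the iterate error $Z_t$ via smoothness, (ii) derive a one-step contraction recursion for $Z_t=\mathbb{E}\|G^t-G^*\|^2$, and (iii) solve that recursion by induction on $t$. The first stage is immediate: since $G^*$ minimizes $F$ we have $\nabla F(G^*)=0$, so Assumption~\ref{assumption1} applied at the optimum gives $F(G^t)\le F^*+\frac{L}{2}\|G^t-G^*\|^2$, hence $\mathbb{E}[F(G^t)]-F^*\le\frac{L}{2}Z_t$, which is exactly the left inequality of the theorem. Everything then rests on controlling $Z_t$.

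For the second stage I would first note that, under the adversary model (fewer than half the clients compromised) together with the filtering performed by Layer Scoring and Anomaly Detection, the aggregation $G^{t}=\sum_{i\in C_b}\frac1m w_i^t$ is --- up to the detection guarantee --- an average of benign local models, so the global trajectory obeys the same recursion as FedAvg restricted to honest participants. Writing $G^{t+1}-G^*=(G^t-G^*)-\eta^t\bar g^t$ with $\bar g^t$ the (multi-step) averaged stochastic update over $C_b$, I expand $\|G^{t+1}-G^*\|^2$ into the term $\|G^t-G^*\|^2$, a cross term $-2\eta^t\langle G^t-G^*,\bar g^t\rangle$, and a noise term $(\eta^t)^2\|\bar g^t\|^2$. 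The cross term is handled with the $\mu$-strong convexity of Assumption~\ref{assumption2}, which produces the contraction factor $(1-\mu\eta^t)$; the noise term with the uniform gradient bound of Assumption~\ref{assumption3} via Jensen ($\mathbb{E}\|\bar g^t\|^2\le U^2$); and the residual terms caused by $\tau_i$ local steps (client drift) and by data heterogeneity are controlled with Assumption~\ref{assumption4} (bounding $\mathbb{E}\|w_i^t\|^2$, hence $\mathbb{E}\|w_i^t-G^t\|^2$) together with the non-i.i.d.\ gap $\Gamma$. Collecting constants yields a recursion of the form $Z_{t+1}\le(1-\mu\eta^t)Z_t+(\eta^t)^2 A$ with $A=4U^2+M^2+2\Gamma$, the three pieces tracking respectively the stochastic-gradient energy, the parameter-norm (drift) bound, and the heterogeneity gap.

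For the third stage I would substitute $\eta^t=\frac{\theta}{t+\epsilon}$ and prove $Z_t\le\frac{\lambda}{(t+\epsilon)^{1/2}}$ by induction on $t$. The base case is precisely the clause $\lambda\ge(\epsilon+1)Z_1$ in the definition of $\lambda$. For the inductive step I would use $\theta\mu>1$ together with $\lambda\ge\frac{\theta A}{\theta\mu-1}$ and the elementary estimate $(t+\epsilon)^{-1/2}-(t+1+\epsilon)^{-1/2}\le\frac12(t+\epsilon)^{-3/2}$ to push the bound from $t$ to $t+1$; letting $t\to\infty$ gives the stated limit, and composing with stage (i) yields the full chain $\mathbb{E}[F(G^t)]-F^*\le\frac L2 Z_t\le\frac L2\frac{\lambda}{(t+\epsilon)^{1/2}}\to 0$.

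I expect the main obstacle to be the recursion in stage (ii), for two reasons. First, the retained set $C_b^t$ is chosen adaptively by FLD from the uploaded models, so one must argue that conditioning on ``$i$ is kept'' does not bias the stochastic gradient --- this is where the correctness of the detection modules is invoked, so the convergence claim is really \emph{conditional} on FLD retaining the benign updates. Second, accommodating multiple local iterations $\tau_i$ forces a client-drift bound, and it is exactly to close this that Assumption~\ref{assumption4} is needed; making the bookkeeping collapse to the clean constant $A=4U^2+M^2+2\Gamma$ rather than a larger expression is the delicate part, and is also what ultimately produces the $(t+\epsilon)^{-1/2}$ rate rather than the faster $(t+\epsilon)^{-1}$ one might hope for under strong convexity.
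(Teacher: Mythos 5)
Your overall architecture --- $L$-smoothness to pass from $\mathbb{E}[F(G^t)]-F^*$ to $\frac{L}{2}Z_t$, a one-step contraction for $Z_t$ built from strong convexity plus the variance and drift bounds, then induction with $\eta^t=\theta/(t+\epsilon)$ --- is exactly the paper's, and your stage (i) and the strong-convexity/variance pieces of stage (ii) match the paper's $P_1$--$P_5$ decomposition term for term. Your observation that $\mathbb{E}g^t=\bar g^t$ is delicate because $C_b^t$ is selected adaptively from the realized updates is a genuine subtlety the paper does not address; reading the theorem as conditional on FLD retaining the benign updates is the honest interpretation.

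The genuine gap is in the additive constant of your recursion. You assert $Z_{t+1}\le(1-\mu\eta^t)Z_t+(\eta^t)^2A$, but the bounding strategy you describe cannot produce a term quadratic in $\eta^t$: the client-drift contribution enters through the cross term $-2\eta^t\sum_i p_i\langle G^t-w_i^t,\nabla F_i(w_i^t)\rangle$, and after Young's inequality this becomes $\eta^t\sum_i p_i\bigl(\|G^t-w_i^t\|^2+\|\nabla F_i(w_i^t)\|^2\bigr)$; since Assumption 4 only supplies the \emph{static} bound $\sum_i p_i\|G^t-w_i^t\|^2\le M^2$ (not one decaying in $t$), and the heterogeneity term likewise enters as $2\eta^t\Gamma$, what you can actually derive is $Z_{t+1}\le(1-\mu\eta^t)Z_t+\eta^t A$ --- which is precisely the recursion the paper obtains. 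But that linear recursion has fixed point $A/\mu$ and cannot force $Z_t\to0$: your inductive step needs the additive term to be $o(\eta^t)$, and the paper's own induction only appears to close because of an algebraic slip (the contraction contributes $-\theta\mu\lambda(t+\epsilon)^{-3/2}$ under the hypothesis $Z_t\le\lambda(t+\epsilon)^{-1/2}$, which the paper silently treats as $-\theta\mu\lambda(t+\epsilon)^{-1}$). To make either your $(t+\epsilon)^{-1/2}$ bound or the sharper $(t+\epsilon)^{-1}$ rate rigorous, you must show that the drift $\mathbb{E}\|G^t-w_i^t\|^2$ and the heterogeneity contribution are themselves $O((\eta^t)^2)$ or at least $O(\eta^t)$ --- for instance via the bound of order $(\eta^t)^2(\tau-1)^2U^2$ on local--global deviation used in the FedAvg convergence analysis the paper cites --- rather than invoking Assumption 4 as a constant bound. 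As written, both your stage (ii) and the paper's proof leave this hole, and it sits exactly at the step you yourself flagged as the delicate part.
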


\begin{proof}
Let $G^{t+1}$ denote the global model's parameters in the central server in $(t+1)$-round and $G^*$ be the optimal parameters in the central server. Additionally, $g^t=\sum\limits_{i\in C_b^t}p_i\nabla F_i(w_i^t,\xi_i^t)$, where $g^t$ denotes the gradient updates uploaded by the clients in $t$-round and $p_i$ denotes the weight of the $i$-client's gradient during aggregation. $\bar{g^t}=\sum\limits_{i\in C_b^t}p_i\nabla F_i(w_i^t)$ and $G^{t+1}=G^t-\eta^t g^t$, where $C_b^t$ denotes the collection of benign clients chosen by FLD in $t$-round. Then, we have 
\begin{equation}
    \begin{split}\label{ineq1}
	||G^{t+1}-G^*||^2
	& = ||G^t-\eta^t g^t-G^*-\eta^t \bar{g^t} + \eta^t \bar{g^t}|| \\
	& = \underbrace{||G^t-G^*-\eta^t \bar{g^t}||^2}_{P_1} \\
        & +\underbrace{2\eta^t<G^t-G^*-\eta^t \bar{g^t},\bar{g^t}-g^t>}_{P_2} \\
	& +(\eta^t)^2||\bar{g^t}-g^t||^2.
    \end{split}
\end{equation}

Since $\mathbb{E}g^t=\bar{g^t}$, we see $\mathbb{E}P_2=0$. Now we split $P_1$ into three terms:   
\begin{equation}
    \begin{split}\label{ineq2}
	P_1
	& = ||G^t-G^*-\eta^t \bar{g^t}||^2 \\
	& = ||G^t-G^*||^2\underbrace{-2\eta^t<G^t-G^*,\bar{g^t}>}_{P_3}+\underbrace{(\eta^t)^2||\bar{g^t}||^2}_{P_4}.
    \end{split}
\end{equation}

Focusing on the last term in the above equation, according to Assumption \ref{assumption3}, we have 
\begin{equation*}
\begin{split}
    \mathbb{E}P_4
 & =\mathbb{E}[(\eta^t)^2||\bar{g^t}||^2] \\
 & \leq (\eta^t)^2\sum\limits_{i\in C_b^t}p_i^2\mathbb{E}||\nabla F_i(w_i^t)||^2 \\
 & \leq (\eta^t)^2 U^2 .  
\end{split}
\end{equation*}

Consider $P_3$, it follows:
\begin{equation}
    \begin{split}\label{ineq3}
	P_3
	& = -2\eta^t<G^t-G^*,\bar{g^t}> \\
	& = -2\eta^t\sum\limits_{i\in C_b^t}p_i<G^t-w_i^t,\nabla F_i(w_i^t)>\\
	& -2\eta^t\sum\limits_{i\in C_b^t}p_i<w_i^t-G^*,\nabla F_i(w_i^t)>.
    \end{split} 
\end{equation}

It is well known that $-2ab\leq a^2+b^2$, so
\begin{equation}
    \begin{split}\label{ineq4}
    & -2<G^t-w_i^t,\nabla F_i(w_i^t)>\\ 
    & \leq ||G^t-w_i^t||^2+||\nabla F_i(w_i^t)||^2 .
    \end{split}
\end{equation}

According to Assumption \ref{assumption2}, it follows:
\begin{equation}
    \begin{split}\label{ineq5}
    & -<w_i^t-G^*,\nabla F_i(w_i^t)> \\
    & \leq -(F_i(w_i^t)-F_i(G^*)) -\frac{\mu}{2}||w_i^t-G^*||^2 .
    \end{split}
\end{equation}

Use Equation~\ref{ineq2} and Inequalities~\ref{ineq3},~\ref{ineq4},~\ref{ineq5}, we obtain the following formula
\begin{equation*}
    \begin{split}
	   P_1  
        & = ||G^t-G^*-\eta^t \bar{g^t}||^2 \\
	& \leq ||G^t-G^*||^2+(\eta^t)^2||\nabla F_i(w_i^t)||^2 \\
        & +\eta^t\sum\limits_{i \in C_b^t}p_i(||G^t-w_i^t||^2
         +||\nabla F_i(w_i^t)||^2) \\
	& - 2\eta^t\sum\limits_{i \in C_b^t}p_i(F_i(w_i^t)-F_i(G^*)+\frac{\mu}{2}||w_i^t-G^*||^2)\\
	& \leq (1-\eta^t\mu)||G^t-G^*||^2+((\eta^t)^2+\eta^t)||\nabla F_i(w_i^t)||^2\\
	& + \eta^t\sum\limits_{i \in C_b^t}p_i||G^t-w_i^t||^2 \\
        & \underbrace{- 2\eta^t\sum\limits_{i \in C_b^t}p_i(F_i(w_i^t)-F_i(G^*))}_{P_5}.
    \end{split} 
\end{equation*}
Motivated by ~\cite{li2019convergence}, we define $\Gamma=F^*-\sum\limits_{i\in C_b^t}p_iF_i^*$. $\Gamma$ is used to measure the degree of heterogeneity between the local models and the global model, in i.i.d data distributions, $\mathbb{E}\Gamma=0$. We have 
\begin{equation*}
    \begin{split}
	P_5 
	& = - 2\eta^t\sum\limits_{i \in C_b^t}p_i(F_i(w_i^t)-F_i(G^*))\\
	& = - 2\eta^t\sum\limits_{i \in C_b^t}p_i(F_i(w_i^t)-F_i^*+F_i^*-F_i(G^*))\\
	& \leq  2\eta^t\sum\limits_{i \in C_b^t}p_i(F^*-F_i^*)=2\eta^t \Gamma, 
    \end{split} 
\end{equation*}
Hence,
\begin{equation*}
    \begin{split}
	P_1
	& \leq (1-\eta^t\mu)||G^t-G^*||^2+((\eta^t)^2+\eta^t)||\nabla F_i(w_i^t)||^2\\
	& + \eta^t\sum\limits_{i \in C_b^t}p_i||G^t-w_i^t||^2 +2\eta^t \Gamma.
    \end{split} 
\end{equation*}

Utilize the above results, we have
\begin{equation}
    \begin{split}\label{ineq6}
	\mathbb{E}|G^{t+1}-G^*||^2
	& \leq (1-\eta^t\mu)\mathbb{E}||G^t-G^*||^2 \\
        & + ((\eta^t)^2+\eta^t)\mathbb{E}||\nabla F_i(w_i^t)||^2\\
	& + \eta^t\sum\limits_{i \in C_b^t}p_i\mathbb{E}||G^t-w_i^t||^2 +2\eta^t \Gamma \\
        & + (\eta^t)^2\mathbb{E}||\bar{g^t}-g^t||^2.
    \end{split}
\end{equation}

According to Assumption \ref{assumption3}, it follows:
\begin{equation}
    \begin{split}\label{ineq7}
	\mathbb{E}||g^t-\bar{g^t}||^2
	& = \mathbb{E}||\sum\limits_{i \in C_b^t}p_i\nabla F_i(w_i^t,\xi_i^t)-\nabla F_i(w_i^t)||^2 \\
	& \leq \sum\limits_{i \in C_b^t}p_i^2 (\mathbb{E}||\nabla F_i(w_i^t,\xi_i^t)||^2 \\
        & +\mathbb{E}||\nabla F_i(w_i^t)||^2) \\
	&\leq 2\sum\limits_{i \in C_b^t}p_i^2 U^2.
    \end{split} 
\end{equation}

According to Assumption \ref{assumption4}, it follows:
\begin{equation}\label{ineq8}
    \begin{split}
	\sum\limits_{i \in C_b^t}p_i||G^t-w_i^t||^2
	& = \sum\limits_{i \in C_b^t}p_i||\sum\limits_{i \in C_b^t}p_i w_i^t-w_i^t||^2 \\
	& \leq \sum\limits_{i \in C_b^t}p_i || w_i^t||^2 \\
	& \leq M^2.
    \end{split}
\end{equation}

So far, we have all the preparations ready to prove the final conclusion. Let  $Z_t=\mathbb{E}||G^t-G^*||^2$, $\eta^t=\frac{\theta}{t+\epsilon}$, $\epsilon>0$, $\theta > \frac{1}{\mu}$, $\lambda=\max\{\frac{\theta A}{\theta \mu -1}, (\epsilon+1)Z_1\}$, our goal of proving $Z_t \leq \frac{\lambda}{(t+\epsilon)^{\frac{1}{2}}}$ can be achieved as follows.
\newline
For $t=1$, it holds. Suppose that the conclusion establishes for some t and use Inequalities~\ref{ineq6},~\ref{ineq7},~\ref{ineq8}, we have $Z_{t+1}$ as follows: 
\begin{equation}
    \begin{split}
        Z_{t+1}
	& \leq (1-\eta^t\mu) Z_{t}+((\eta^t)^2+\eta^t)U^2 + \eta^t M^2 \\
        &  + 2 (\eta^t)^2 \sum\limits_{i \in C_b^t}p_i^2 U^2+2\eta^t \Gamma \\
	& \leq (1-\eta^t\mu)Z_{t} + \eta^t A \\
	& = \frac{(t+\epsilon)^{\frac{1}{2}}-1}{(t+\epsilon)}\lambda+(\frac{\theta A}{t+\epsilon}-\frac{\theta \mu -1}{t+\epsilon}\lambda) \\
	& \leq \frac{\lambda}{(t+\epsilon +1)^{\frac{1}{2}}},
    \end{split}
\end{equation}
where $A=4U^2+M^2+2\Gamma$.
Then, from Assumption \ref{assumption1}, we get
\begin{equation}
    \begin{split}
        \mathbb{E}[F(G^t)]-F^*
	\leq \frac{L}{2} Z_t 
	\leq \frac{L}{2}\frac{\lambda}{(t+\epsilon )^{\frac{1}{2}}}
	\stackrel{t \to \infty}{\longrightarrow}0.
    \end{split}
\end{equation}
\end{proof}

\section{Experiments}
\subsection{Experiments Setup}
\textbf{Datasets and Models.}
To evaluate the effectiveness of FLD, we tested two different federated learning use cases, namely word prediction used by~\cite{howtobackdoor}, and image classification used by DBA~\cite{xie2020dba}.
\setlist[itemize]{leftmargin=*}
\begin{itemize}
  \item \textbf{Word Prediction:} We followed the experiment setup of~\cite{howtobackdoor}. We use the Reddit public dataset from November 2017 and filter out users with fewer than 150 or more than 500 posts. We assume that every remaining Reddit user is a participant in federated learning and treat each post as a sentence in the training data and use a model consisting of two LSTM layers and a linear output layer.
  \item \textbf{Image Classification:} We conducted experiments on three classic image datasets: MNIST, CIFAR10, and Tiny-imagenet. MNIST is a classic handwritten digital image set containing 60,000 training examples and 10,000 test examples. CIFAR10 consists of 10 classes of 32x32 color images, including a total of 50,000 training examples and 10,000 test examples. The Tiny-imagenet~\cite{xie2020dba} consists of 200 classes each of which has 500 training images and 50 test images. To simulate the non-i.i.d environment, we divide the datasets using Dirichlet distribution, a commonly used prior distribution in Bayesian statistics~\cite{NONIID}.
\end{itemize}

\begin{table}[htb]   
\begin{center}   
\caption{Dataset description}  
\label{table:dataset} 
\small 
\begin{tabular}{c|c|c}   
\hline   \textbf{Field} &\textbf{Datasets}   & \textbf{Model}\\  
\hline   NLP  & Reddit &   2-layer LSTM \\
\hline 
\multirow{3}*{Image Classification}  &MNIST   &  2 conv and 2 fc    \\ 
~  &  CIFAR10   &  lightweight Resnet-18 \\  
~  &  Tiny-imagenet   &  Resnet-18 \\     
\hline 
\end{tabular}   
\end{center}   
\end{table}


\noindent\textbf{Baselines.} We choose the following defense approaches as baselines: FoolsGold~\cite{foolgold}, Robust Federated Aggregation (RFA)~\cite{RFA}, Differential Privacy (DP)~\cite{canyou}, Krum~\cite{krum}, Trimmed Mean~\cite{Trimmed_Mean}, Bulyan~\cite{Bulyan}, and Flame~\cite{flame}. Please refer to the Appendix for details.

\noindent\textbf{Evaluation Metrics} ~
Based on the characteristics of federated learning and backdoor attacks, we consider the following metrics for evaluating the effectiveness of backdoor attacks and defense techniques.
\setlist[itemize]{leftmargin=*}
\begin{itemize}
\item \textbf{Backdoor Accuracy (BA)} refers to the model accuracy for backdoor tasks, where the attacker's target is to maximize BA, and an effective defense algorithm minimizes it.
\item \textbf{Main Accuracy (MA)} refers to the model accuracy for the primary task. Note that both attackers and defenders aim to minimize the attack's impact on MA.
\end{itemize}
\subsection{Results}
As shown in Table~\ref{tab:cs_am}, compared to SOTA defenses: FoolsGold, RFA, DP, Krum, Trimmed Mean, Bulyan and Flame, FLD is the most effective on all 4 datasets facing Attack A-M of constrain-and-scale~\cite{howtobackdoor}. On the MNIST dataset, RFA, Krum, Bulyan, Flame, and FLD perform fairly well, while FoolsGold, DP, and Trimmed Mean have worse performance with the Backdoor Accuracy getting close to 100\%. However, on CIFAR and Tiny-imagenet datasets, RFA and Bulyan also failed at defending against backdoors. Krum albeit successfully defended against the backdoor attack, performed significantly reduced MA. This is because CIFAR and Tiny-imagenet use ResNets neural networks, which are much more complex than the 2-layer convolutional networks deployed on MNIST. In the complex and larger models, the backdoor is hidden more deeply and thus difficult to detect or defuse with these model-granularity defensive methods. Flame has slightly better performance but is still outperformed by FLD. The reason mainly is that FLD is based on layer granularity and thus allows for fine-grained detection of anomalous models, resulting in the best defense performances on all datasets. Specifically, FLD presents 88.97\% MA on CIFAR, 25.26\% MA on Tiny-imagenet and 0.00\% BA on both datasets. In the NLP task, FLD effectively defended against backdoor attacks, with a BA of 0.00\% and MA of 19.26\% on Reddit dataset.

To further demonstrate the effectiveness of FLD, we tested FLD against several other defenses on the DBA attack, which breaks down global trigger patterns into separate local patterns and embeds them separately into the training sets of different adversaries. Compared to centralized backdoor attacks, DBA take better advantage of the distributed nature of federated learning and are therefore more stealthy and difficult to detect for federated learning.

As shown in Table~\ref{tab:dba}, RFA and Bulyan cannot effectively defend against DBA attack on MNIST, while Krum can effectively defend against the DBA attack but at the cost of a considerable drop on MA, with an accuracy of 54.12\% on CIFAR10 and 9.35\% on Tiny-imagenet. Flame achieves better defense effects on the MNIST and Tiny-imagenet, but worse performance on CIFAR10. In a word, SOTA defense methods either fail to effectively defend against DBA or endure a huge impact on the accuracy of the primary task, whereas FLD achieves effective defense on MNIST, CIFAR and Tiny-imagenet with backdoor success rates of only 0.03\%, 0.60\%, and 0.00\%, respectively, with negligible drops in MA, i.e., 0.15\%, 3.18\%, and 0.10\%.

\begin{table}[htbp]
  \centering
  \caption{Effectiveness of FLD in comparison to state-of-the-art defenses for constrain-and-scale attack, in terms of Backdoor Accuracy (BA) and primary task Accuracy (MA). All values are percentages.}
  \resizebox{\columnwidth}{!}{
    \begin{tabular}{l|cc|cc|cc|cc}
    \hline
    \multicolumn{1}{c|}{\multirow{2}[4]{*}{Defenses}} & \multicolumn{2}{|c|}{Reddit} & \multicolumn{2}{c|}{MNIST} & \multicolumn{2}{c|}{CIFAR10} & \multicolumn{2}{c}{Tiny-imagenet} \bigstrut\\
\cline{2-9}          & \multicolumn{1}{c|}{BA} & \multicolumn{1}{c|}{MA} & \multicolumn{1}{c|}{BA} & \multicolumn{1}{c|}{MA} & \multicolumn{1}{c|}{BA} & \multicolumn{1}{c|}{MA} & \multicolumn{1}{c|}{BA} & \multicolumn{1}{c}{MA} \bigstrut\\
    \hline
    No attack &   -    &    19.38   &    -   & 99.06  &    -   & 89.60  &     -  & 25.34  \bigstrut\\
   No defense & 99.63  & 19.43  & 98.01  & 98.94  & 96.70  & 88.04  & 99.41  & 25.33  \bigstrut\\
    \hline
    \hline
    FoolsGold & 100.00  & 19.16  & 99.07  & 99.01  & 98.97  & 85.24  & 99.39  & 25.47  \bigstrut[t]\\
    DP    & 99.72  & \textbf{19.41}  & 98.12  & 98.91  & 98.36  & 85.74 & 99.68  & 25.75  \\
    RFA & 100.00  & 19.44  & 0.23  & 98.98  & 86.15  & 86.62  & 0.62  & 25.21  \\
    Trimmed Mean & \textbf{0.00}  & 19.21  & 96.56  & 98.96  & 96.61  & 88.45  & 99.31  & \textbf{25.79}  \\
    Krum  & 100.00  & 19.24  & 0.17  & 97.46  & 13.94  & 56.82  & \textbf{0.00}  & 9.04  \\
    Bulyan & \textbf{0.00}  & 19.24  & 0.85  & 99.08  & 91.38  & 88.55  & 99.01  & 25.49 \\
    Flame  & \textbf{0.00}  & 19.25  & \textbf{0.00}  & 98.46  & 7.02  & 88.85  & \textbf{0.00}  & 25.56  \bigstrut[b] \\
    \hline
    \textbf{FLD} & \textbf{0.00}  & 19.26  & \textbf{0.00}  & \textbf{99.09}  & \textbf{0.00}  & \textbf{88.97}  & \textbf{0.00}  & 25.29  \bigstrut\\
    \hline
    \end{tabular}%
    }
  \label{tab:cs_am}%
\end{table}%

\begin{table}[htbp]
  \centering
  \caption{Effectiveness of FLD in comparison to SOTA defense methods against DBA attack, in terms of Backdoor Accuracy (BA) and primary task Accuracy (MA). All values are percentages.}
  \resizebox{\columnwidth}{!}{
    \begin{tabular}{l|cc|cc|cc}
    \hline
    \multicolumn{1}{c|}{\multirow{2}[4]{*}{Defenses}} &  \multicolumn{2}{|c|}{MNIST} & \multicolumn{2}{c|}{CIFAR10} & \multicolumn{2}{c}{Tiny-imagenet} \bigstrut\\
\cline{2-7}          & \multicolumn{1}{c|}{BA} & \multicolumn{1}{c|}{MA} & \multicolumn{1}{c|}{BA} & \multicolumn{1}{c|}{MA} & \multicolumn{1}{c|}{BA} & \multicolumn{1}{c}{MA} \bigstrut\\
    \hline
    No attack &     -  & 99.06  &     -  & 89.60  &     -  & 25.34  \bigstrut\\
    No defense & 99.84  & 98.94  & 97.27  & 85.55  & 99.28  & 25.64  \bigstrut\\
    \hline
    \hline
    FoolsGold & 99.81  & 98.91  & 97.27  & 84.66  & 99.34  & 25.62  \bigstrut[t]\\
    DP    & 99.59  & 98.23  & 97.46  & 83.87  & 99.58  & \textbf{25.89}  \\
    RFA & 99.83  & 98.88  & 94.17  & \textbf{87.44}  & 0.46  & 25.11  \\
    Trimmed Mean & 99.77  & 98.84  & 97.07  & 87.14  & 99.27  & 25.75  \\
    Krum  & 0.04  & 98.72  & \textbf{0.00}  & 54.12  & \textbf{0.00}  & 9.35  \\
    Bulyan & 99.83  & 98.98  & 95.83  & 86.91  & 98.82  & 25.06   \\
    Flame  & 0.15  & \textbf{98.99}  & 12.31  & 85.43  & 0.26  & 25.13  \bigstrut[b] \\
    \hline
    \textbf{FLD} &  \textbf{0.03}  & 98.91  & 0.60  & 86.42  & \textbf{0.00}  & 25.24  \bigstrut\\
    \hline
    \end{tabular}%
    }
  \label{tab:dba}%
\end{table}%

\noindent\textbf{Generalizability}~
To demonstrate the generalizability of FLD, we extend our evaluation to various backdoors such as constrain-and-scale, DBA, Edge-Case~\cite{tails}, Little Is Enough~\cite{littleisenough}, PGD~\cite{tails} and Flip attack~\cite{flip}, on CIFAR10. As summarized in Table~\ref{tab:attack}, FLD effectively mitigated all the attacks with negligible impact on MA. Therefore, FLD achieves robust performances in the face of various attacks and thus shows great generalizability.
It is worth clarifying that the target of Flip attack is to decrease the accuracy of specific labels, so the MA of Flip refers to the accuracy of the attacked class. In our experiments, we set the attacked class to be ``airplane'', and the attacker succeeded to decrease the prediction accuracy of ``airplane'' to 0.5\%. FLD effectively defended against Flip attack and present the prediction accuracy of ``airplane'' at 96.5\%.

\begin{table}[htbp]
  \centering
  \caption{FLD defends against SOTA attacks in terms of Backdoor Accuracy (BA) and primary task Accuracy (MA) on CIFAR10. All values are percentages.}
    \small 
    \begin{tabular}{l|cc|cc}
    \hline
    \multicolumn{1}{c|}{\multirow{2}[4]{*}{Attack}} & \multicolumn{2}{c|}{No Denfense } & \multicolumn{2}{c}{FLD} \bigstrut\\
\cline{2-5}          & \multicolumn{1}{c|}{BA} & MA    & \multicolumn{1}{c|}{BA} & MA \bigstrut\\
    \hline
    constrain-and-scale (A-M) & 96.70  & 88.04  & 0.00  & 88.97  \bigstrut\\
\cline{1-1}    DBA   & 97.27  & 85.55  & 0.60  & 86.42  \bigstrut\\
\cline{1-1}    Edge-Case & 76.02  & 88.79  & 7.14  & 88.11  \bigstrut\\
\cline{1-1}    Little Is Enough & 91.02  & 87.79  & 0.00  & 89.03  \bigstrut\\
\cline{1-1}    PGD   & 91.54  & 87.50  & 0.00  & 89.01  \bigstrut\\
\cline{1-1}    Neurotoxin   &    86.42   & 88.56  &    0.00   & 89.08  \bigstrut\\
    \hline
    \end{tabular}%
  \label{tab:attack}%
\end{table}%

\subsection{Ablation Tests}

\textbf{Proportion of Compromised Clients.} We use $PMR$ to represent the proportion of compromised clients, $PMR=\frac{k}{n}$, where $k$ represents the number of compromised clients and $n$ represents the number of all clients. We evaluate FLD for different $PMR$ values, i.e., 0, 0.1, 0.2, 0.3, 0.4. An effective backdoor defense  method should have minimum impact on the primary task in an environment with different proportions of attackers, also when no attackers exist. 

We followed the DBA~\cite{xie2020dba} setup and randomly selected 10 clients for each round of federated learning training, where 0, 1, 2, 3, 4 clients were malicious clients representing $PMR$ values 0, 0.1, 0.2, 0.3, 0.4, respectively. The results of the 10 rounds of the constrain-and-scale attack are shown in Figure~\ref{fig:pmr}. In CIFAR10, the success rates of BA after 10 rounds of attacks were 69.05\%, 82.24\%, 83.88\% and 85.54\%. Higher $PMR$ leads to a faster BA growth rate. FLD can successfully identify the poisoned clients in scenarios with $PMR$ of 0.1, 0.2, 0.3, and 0.4. On MNIST, when $PMR=0.1$, A-M attack needs more rounds to successfully inject the backdoor, hence the BA with No Defense is also 0. With $PMR$ of 0.2, 0.3, and 0.4, the backdoor was embedded smoothly. Nevertheless, FLD still presented a good defense against malicious clients. With $PMR=0$, FLD has a negligible effect on MA. Therefore, FLD is robust to different poisoning rates (proportion of compromised clients).
\begin{figure}[H]
\setlength{\belowcaptionskip}{-6pt} 
\centering
\includegraphics[width=\linewidth]{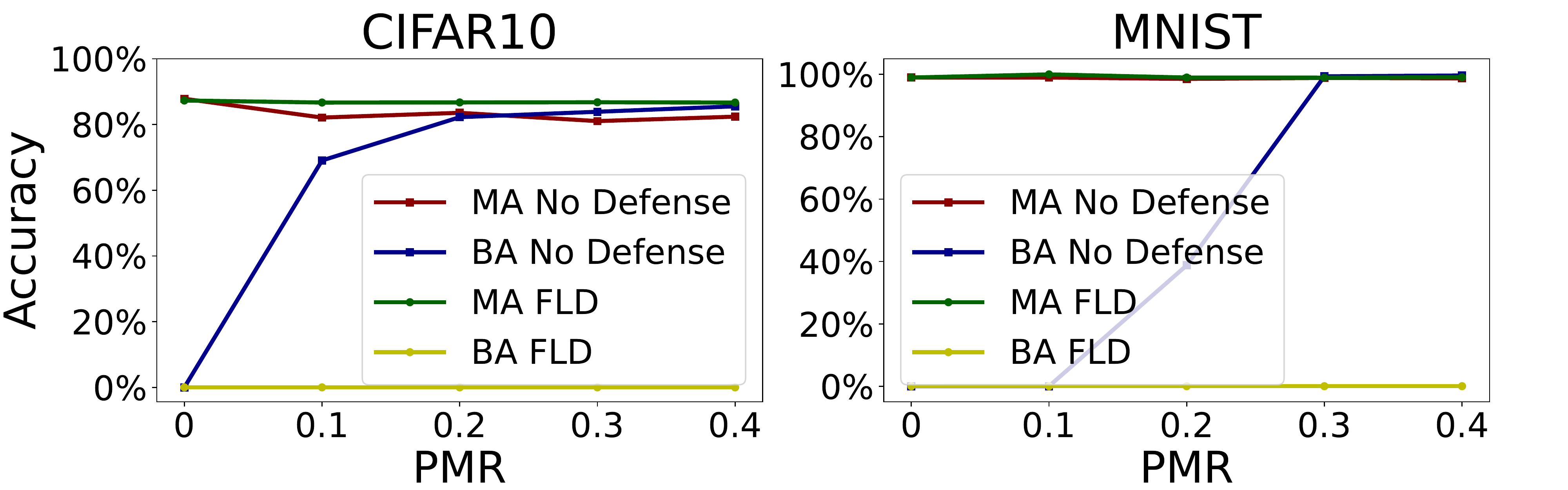}
\caption{Impact of PMR}
\label{fig:pmr}
\end{figure}
\noindent\textbf{Degree of Data Heterogeneity}. Since FLD is based on detecting the difference between benign and backdoor models, the different data distribution among clients may affect its effectiveness. To test FLD under diverse data distribution assumptions, we vary the degree of data heterogeneity across clients by adjusting the parameter of Dirichlet distribution to 0.1, 0.5, 1, 10, and 100, on the CIFAR10 dataset. Figure~\ref{fig:data} showcases the data distribution of clients when the Dirichlet alpha is 0.1 and 100. It can be seen that when Dirichlet alpha = 0.1, the data distribution is extremely heterogeneous between clients, with each client having a different number of samples and labels. When Dirichlet alpha = 100, the data distribution is close to i.i.d.
As shown in Fig.~\ref{fig:alpha} , MA decreases as the degree of data heterogeneity increases. FLD is effective in identifying and removing malicious clients in both extreme non-i.i.d and near-i.i.d scenarios.

\begin{figure}[htbp]
\centering
\includegraphics[width=\linewidth]{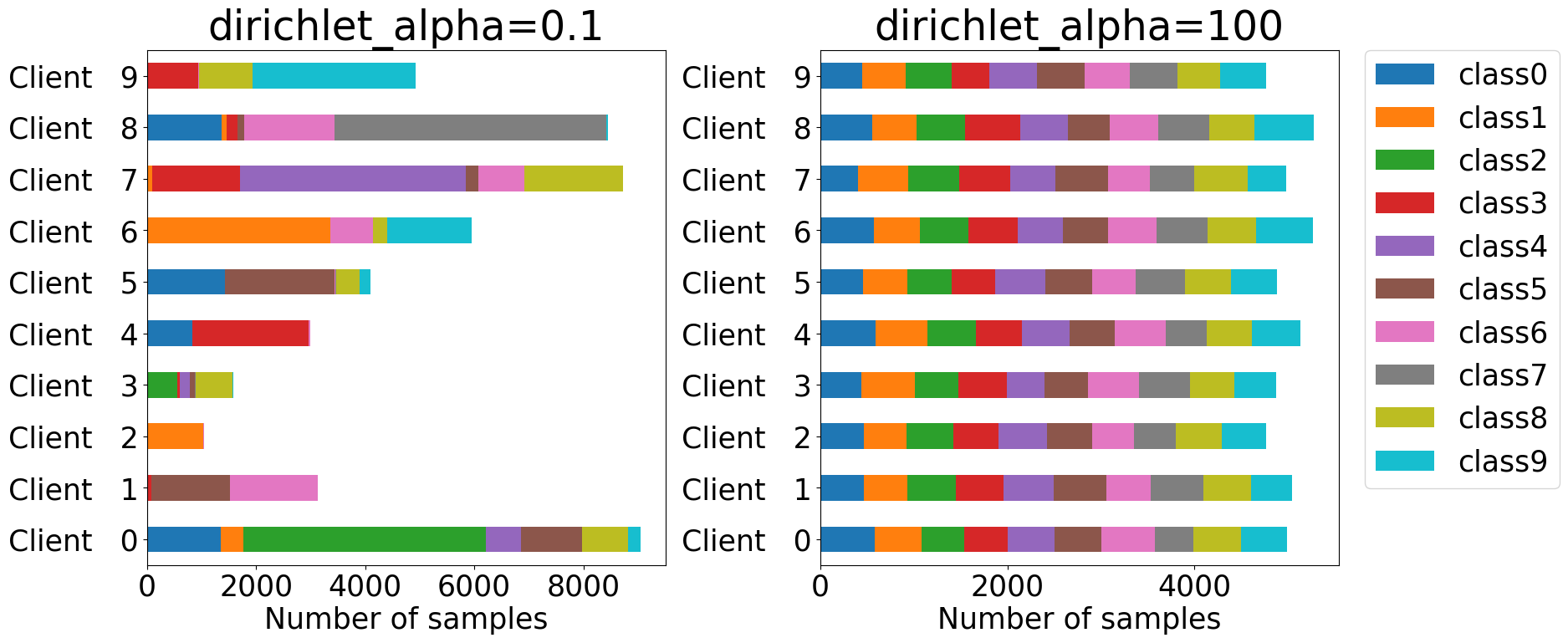}
\caption{A visualization  of client data distribution for Dirichlet alpha = 0.1 and Dirichlet alpha = 100.}
\label{fig:data}
\end{figure}

\section{Conclusions}
In this work, we show that the existing defense methods can not effectively defend against SOTA backdoor attacks in federated learning. To tackle this challenge, we propose an innovative robust federated learning backdoor defense algorithm, Federated Layer Detection (FLD). FLD is the first algorithm that assesses model outliers based on the granularity of layer. We theoretically prove the convergence guarantee of FLD on both i.i.d and non-i.i.d data distributions. Extensive experimental results prove the superiority of FLD over SOTA defense methods against various attacks in different scenarios, demonstrating the robustness and generalizability of FLD.

\bibliographystyle{IEEEtran}
\bibliography{fgcs}

\clearpage
\appendix
\setcounter{equation}{0}
\renewcommand\theequation{\arabic{equation}}
\subsection{Hyperparameters}
Reddit datasets are collected from distributed clients and therefore do not require manual division~\cite{howtobackdoor}. We use Dirichlet distribution to partition the image datasets (MNIST, CIFAR10, Tiny-imagenet). The distribution hyperparameter is 0.5, 0.9, and 0.5 for MNIST, CIFAR10, and Tiny-imagenet. Each client uses SGD as the optimizer and a default batch size of 64. $\mu $ is set as 3 by default.
For the semantic backdoor, we follow the experimental setup used by~\cite{howtobackdoor}, where the trigger sentence is ``pasta from Astoria is'' and the target word is ``delicious''. After the model was trained for 5000 rounds with 100 randomly selected clients in each round, the adversary used 10 malicious clients to inject the backdoor.
For the pixel pattern backdoor, we set specific pixels (same as the ones selected by DBA) to white, and then modify the label of the sample with the trigger to backdoor label. The backdoor label is ``digit 2'' in MNIST, ``bird'' in CIFAR10, and ``bullfrog'' in Tiny-imagenet. The default $PMR$ is $20/64$ for MNIST, $10/64$ for CIFAR, and $20/64$ for Tiny-imagenet, to be consistent with DBA~\cite{xie2020dba}. All participants train the global model, 10 of which are selected in each round to submit local SGD updates for aggregation. The adversary used 2 malicious clients to inject a backdoor in  Attack A-M of constrain-and-scale, 4 malicious clients to inject a backdoor in Attack DBA, and 1 malicious client to inject a backdoor in  Attack A-S.

\subsection{Baselines}
\setlist[itemize]{leftmargin=*}
\begin{itemize}
    \item \textbf{FoolsGold} argues that in federated learning, malicious clients tend to upload updates with higher similarities than benign clients, since each benign client has a unique data distribution while malicious clients share the same target. FoolsGold leverages this assumption to adapt the learning rate of each client during each iteration. The objective is to preserve the learning rates of the clients uploading distinct gradient updates, while decreasing the learning rates of the clients that consistently contribute similar gradient updates.
    \item \textbf{Robust Federated Aggregation (RFA)} replaces the weighted arithmetic mean with the geometric median for federated learning aggregation. Specifically, RFA aggregates the local model parameters by finding the point that minimizes the sum of the distances to all the other points, where the distance is measured using a suitable metric such as the Euclidean distance. This point is known as the geometric median, representing the ``center'' of the distribution of the local models.
    \item \textbf{Differential Privacy (DP)} is a privacy technique designed to ensure that the output does not reveal individual data records of participants.
    DP can be applied to machine learning to protect the privacy of training data or model updates. DP-based backdoor defense mitigates the impact of poisoned model updates on the global model by adding random noise to the uploaded parameters during aggregation. The random noise dilutes the malicious information injected by the malicious participants to mitigate their impact on the final global model.
    \item \textbf{Krum} selects one of the $n$ local models that is similar to the others as the global model by calculating the Euclidean distance between two of the local models.
    \item \textbf{Trimmed Mean}, also known as truncated mean, is a statistical method for calculating the average of a dataset while eliminating outliers. To compute the trimmed mean, a certain percentage of the highest and lowest values are removed or trimmed, and the mean is then calculated based on the remaining values. Specifically, trimmed mean aggregates each model parameter independently. The server ranks the $j$-th parameters of the $n$ local models. The largest and smallest $k$ parameters are removed, and the average of the remaining $n-2k$ parameters are calculated as the $j$-th parameters of the global model.
    \item \textbf{Bulyan} first iteratively applies Krum to select the local models, then aggregates these local models using a variant of the trimmed mean. In other words, Bulyan is a combination of Krum and Trimmed Mean.
    \item \textbf{Flame} 
    employs the cosine distance metric to measure the dissimilarity between locally uploaded models from various clients. It leverages HDBSCAN density clustering to distinguish between benign and malicious clients. Subsequently, Flame applies a clipping operation to limit the extent of modifications to the local models and introduces noise into the global model.
    
\end{itemize}

\subsection{Effectiveness of Layer Scoring}
To test the necessity and effectiveness of Layer Scoring, we compared the performance between FLD with and without Layer Scoring under different attacker's poisoned data rate~(PDR) on the CIFAR10 dataset. FLD without Layer Scoring uses the same approach as the existing defense methods, i.e., splicing the model weights directly and using COF and Anomaly Detection to detect anomalous models.

As shown in Figure~\ref{fig:Layer Scoring}, as the PDR increases, the BA of the FLD without Layer Scoring rises and then falls. This is due to the fact that higher PDR leads to faster backdoor injection. But, higher PDR also makes the attacks easier to detect. Hence, the impact of the attack increases in the beginning and then decreases as the defense algorithm starts to identify the attack.

\begin{figure}[htbp]
  \centering
  \begin{subfigure}[b]{0.49\linewidth}
    \centering
    \includegraphics[width=\linewidth]{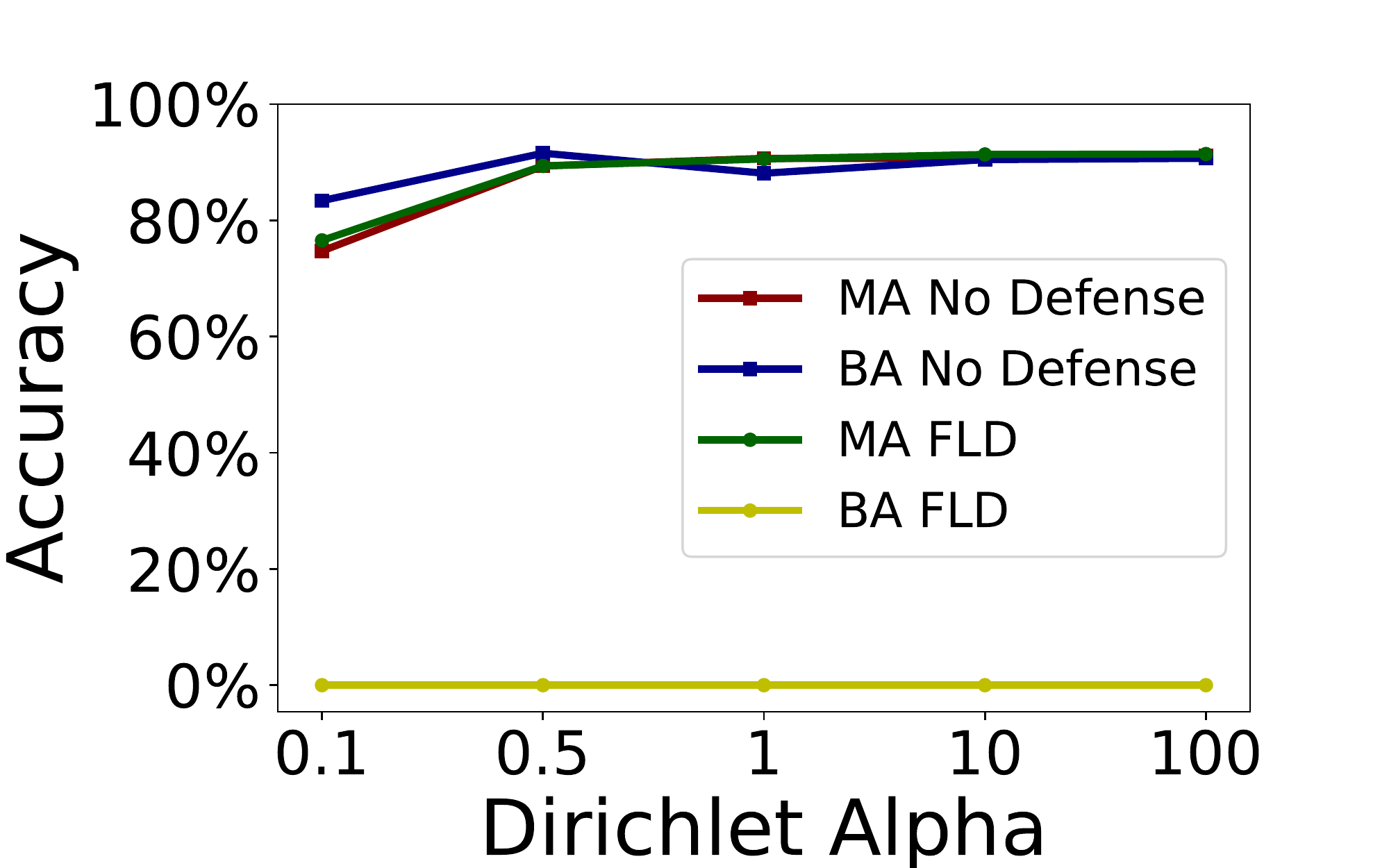}
    \caption{Impact of  data distribution}
    \label{fig:alpha}
  \end{subfigure}
  \hfill
  \begin{subfigure}[b]{0.49\linewidth}
    \centering
    \includegraphics[width=\linewidth]{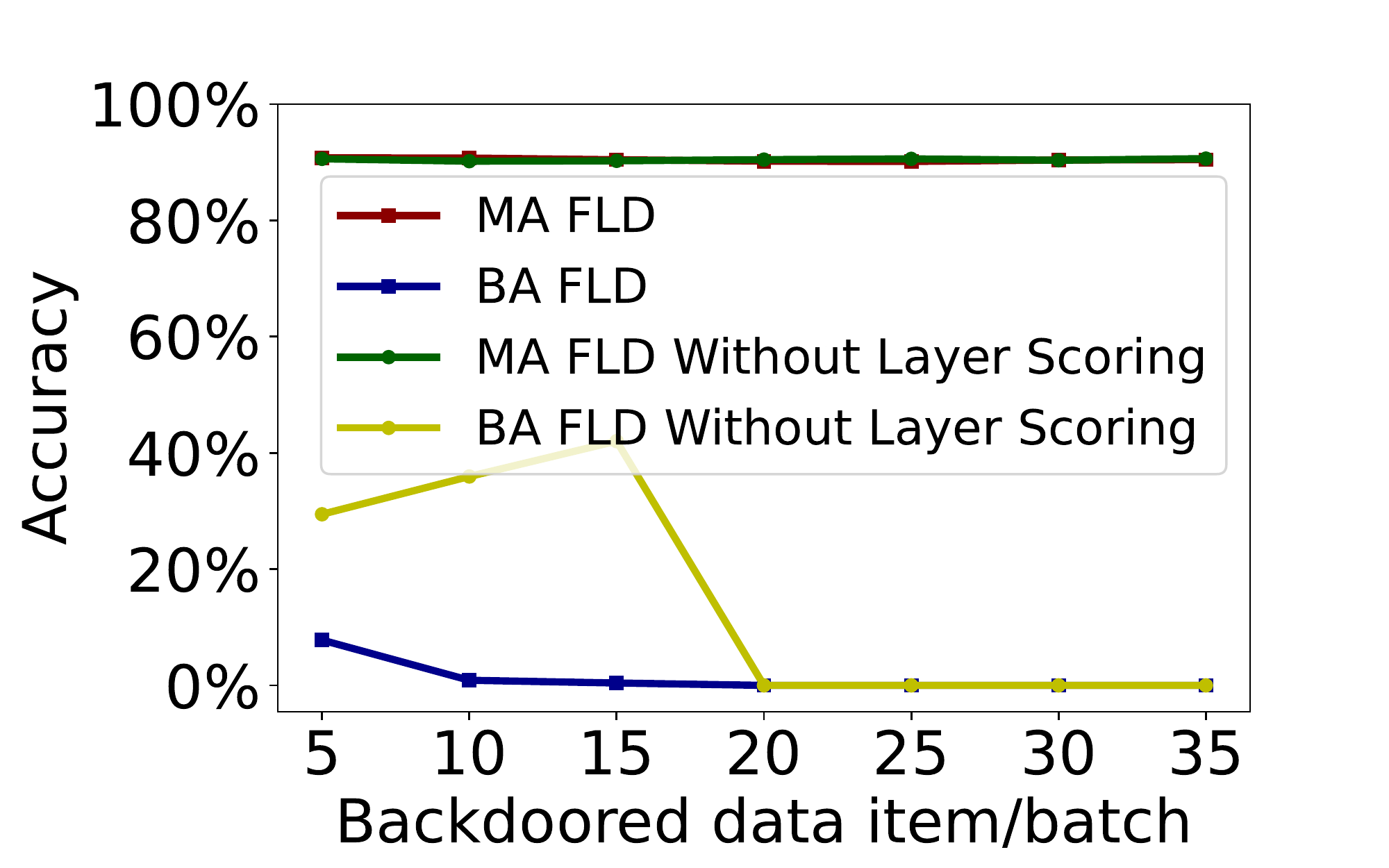}
    \caption{Impact of Layer Scoring}
    \label{fig:Layer Scoring}
  \end{subfigure}
  \caption{Comparison of different impacts}
  \label{fig:comparison}
\end{figure}

\section{Effectiveness of FLD against \textit{Attack A-S} }
In Section~Experiments, we focus on the more insidious and difficult-to-detect \textit{Attack A-M}. Here we evaluate the effectiveness of FLD against \textit{Attack A-S}. Table~\ref{tab:as} shows that FLD is effective for all 4 datasets on \textit{Attack A-S} of constrain-and-scale. 

\begin{table}[htbp]
  \centering
  \caption{Effectiveness of FLD in comparison to state-of-the-art defenses for constrain-and-scale  \textit{Attack A-S}, in terms of Backdoor Accuracy (BA) and Main Task Accuracy (MA). All values are percentages.}
  \resizebox{\columnwidth}{!}{
    \begin{tabular}{l|cc|cc|cc|cc}
    \hline
    \multicolumn{1}{c|}{\multirow{2}[4]{*}{Defenses}} & \multicolumn{2}{c|}{Reddit} & \multicolumn{2}{c|}{MNIST} & \multicolumn{2}{c|}{CIFAR10} & \multicolumn{2}{c}{Tiny-Imagenet} \bigstrut\\
\cline{2-9}          & \multicolumn{1}{c|}{BA} & \multicolumn{1}{c|}{MA} & \multicolumn{1}{c|}{BA} & \multicolumn{1}{c|}{MA} & \multicolumn{1}{c|}{BA} & \multicolumn{1}{c|}{MA} & \multicolumn{1}{c|}{BA} & \multicolumn{1}{c}{MA} \bigstrut\\
    \hline
    No attack &       & 19.38  &       & 99.06  &       & 89.60  &       & 25.34  \bigstrut\\
\cline{1-1}    No defense & 100.00  & 19.35  & 70.01  & 48.03  & 78.86  & 60.60  & 99.19  & 20.84  \bigstrut\\
    \hline
    \hline
    FoolsGold & 100.00  & 19.35  & 70.06  & 48.04  & 53.47  & 77.31  & 99.28  & 21.02  \bigstrut[t]\\
    DP    & 100.00  & 19.35  & 70.01  & 47.88  & 78.07  & 61.11  & 99.31  & 20.58  \\
    RFA & 100.00  & 19.40  & \textbf{0.01}  & 98.78  & \textbf{0.00} & \textbf{89.28}  & \textbf{0.00}  & 25.41  \\
    Trimmed Mean & \textbf{0.00}  & 19.40  & 0.08  & 98.73  & \textbf{0.00}  & 88.57  & 0.21  & \textbf{25.61}  \\
    Krum  &    \textbf{0.00}   &   14.51    & 0.02  & \textbf{98.85}  & \textbf{0.00}  & 44.50  & \textbf{0.00}  & 7.76  \\
    Bulyan & \textbf{0.00}  & 19.39  & 0.13  & 98.67  & \textbf{0.00}  & 88.75  & \textbf{0.00}  & 25.47  \bigstrut[b]\\
    \hline
    \textbf{FLD} & \textbf{0.00}  & \textbf{19.43}  & 0.04  & \textbf{98.85}  & \textbf{0.00}  & 89.26  & \textbf{0.00}  & 25.34  \bigstrut\\
    \hline
    \end{tabular}%
    }
  \label{tab:as}%
\end{table}%

\end{document}